\documentclass[sigconf]{acmart}

\usepackage{amsthm}
\usepackage{float}
\usepackage{bm}
\usepackage{geometry}
\usepackage{subfig}
\usepackage{caption}
\usepackage{makecell}
\usepackage{booktabs}
\usepackage{hyperref}
\usepackage{enumitem}
\usepackage{makecell}
\usepackage{xspace}
\usepackage{graphicx}
\usepackage{textcomp}
\usepackage{tcolorbox}
\usepackage{booktabs}
\usepackage{tabularx}
\usepackage{lipsum}
\usepackage{multirow}
\usepackage{color, xcolor}
\usepackage{balance}

\usepackage{breakurl}

\definecolor{light-gray}{gray}{0.8}

\setlength\tabcolsep{2pt}
\AtBeginDocument{%
	\providecommand\BibTeX{{%
			\normalfont B\kern-0.5em{\scshape i\kern-0.25em b}\kern-0.8em\TeX}}}

\copyrightyear{2023}
\acmYear{2023}
\setcopyright{acmcopyright}\acmConference[WSDM '23]{Proceedings of the Sixteenth ACM International Conference on Web Search and Data Mining}{February 27-March 3, 2023}{Singapore, Singapore}
\acmBooktitle{Proceedings of the Sixteenth ACM International Conference on Web Search and Data Mining (WSDM '23), February 27-March 3, 2023, Singapore, Singapore}
\acmPrice{15.00}
\acmDOI{10.1145/3539597.3570457}
\acmISBN{978-1-4503-9407-9/23/02}




\begin{document}
	
	
        \title{MM-GNN: Mix-Moment Graph Neural Network towards Modeling Neighborhood Feature Distribution }
	
	
	\author{Wendong Bi}
        \authornote{Work performed during the internship at MSRA}
	\affiliation{%
		\institution{Institute of Computing Technology, Chinese Academy of Sciences}
		\city{Beijing}
		\country{China}}
	\email{biwendong20g@ict.ac.cn}
	
	\author{Lun Du}
	\authornote{Corresponding author}
	\affiliation{%
		\institution{Microsoft Research Asia}
		\city{Beijing}
		\country{China}}
	\email {lun.du@microsoft.com}
	
	\author{Qiang Fu}
	\affiliation{%
		\institution{Microsoft Research Asia}
		\city{Beijing}
		\country{China}}
	\email {qifu@microsoft.com}

        \author{Yanlin Wang}
	\affiliation{%
		\institution{Microsoft Research Asia}
		\city{Beijing}
		\country{China}}
	\email {yanlwang@microsoft.com}

        \author{Shi Han}
	\affiliation{%
		\institution{Microsoft Research Asia}
		\city{Beijing}
		\country{China}}
	\email {shihan@microsoft.com}

        \author{Dongmei Zhang}
	\affiliation{%
		\institution{Microsoft Research Asia}
		\city{Beijing}
		\country{China}}
	\email {dongmeiz@microsoft.com}

	\renewcommand{\shortauthors}{Wendong Bi et al.}
	
	\begin{abstract}
		Graph Neural Networks (GNNs) have shown expressive performance on graph representation learning by aggregating information from neighbors. Recently, some studies have discussed the importance of modeling neighborhood distribution on the graph. However, most existing GNNs aggregate neighbors' features through single statistic (e.g., mean, max, sum), which loses the information related to neighbor's feature distribution and therefore degrades the model performance. In this paper, inspired by the method of moment in statistical theory, we propose to model neighbor's feature distribution with multi-order moments. We design a novel GNN model, namely Mix-Moment Graph Neural Network (MM-GNN), which includes a Multi-order Moment Embedding (MME) module and an Element-wise Attention-based Moment Adaptor module. MM-GNN first calculates the multi-order moments of the neighbors for each node as signatures, and then use an Element-wise Attention-based Moment Adaptor to assign larger weights to important moments for each node and update node representations. We conduct extensive experiments on 15 real-world graphs (including social networks, citation networks and web-page networks etc.) to evaluate our model, and the results demonstrate the superiority of MM-GNN over existing state-of-the-art models
	\end{abstract}
	
	\begin{CCSXML}
		<ccs2012>
		<concept>
		<concept_id>10010147.10010257.10010293.10010294</concept_id>
		<concept_desc>Computing methodologies~Neural networks</concept_desc>
		<concept_significance>500</concept_significance>
		</concept>
		<concept>
		<concept_id>10002951.10003260.10003282.10003292</concept_id>
		<concept_desc>Information systems~Social networks</concept_desc>
		<concept_significance>500</concept_significance>
		</concept>
		</ccs2012>
	\end{CCSXML}
	
	\ccsdesc[500]{Computing methodologies~Neural networks}
	\ccsdesc[500]{Information systems~Social networks}
	
	\keywords{Graph neural networks, Graph representation learning, Feature distribution, Moment, Social networks}
	
	
    \settopmatter{printfolios=true}
	\maketitle
	
	\section{Introduction}
	Graph data is ubiquitous in the real world, such as social networks, citation networks, etc. Graph Neural Networks (GNNs) have shown expressive power on graph representation learning in recent years. Most existing GNNs follow the neighborhood aggregation scheme, where each node aggregates features of neighbors to update its own representation. Many GNNs \cite{GCN, GAT,derr2020epidemic,  yang2016revisiting}  with different aggregators have been proposed to solve various downstream tasks, which can be divided into graph-level tasks, edge-level tasks, and node-level tasks, such as graph classification \cite{GIN, bai2019simgnn}, link prediction \cite{zhang2018link, zhou2018dynamic, guo2022multi}, and node classification \cite{dai2022towards, lin2020initialization, lin2014large, fettal2022efficient, jin2021node, zhao2021graphsmote}.
	
	However, the aggregation schemes of existing GNNs usually use single statistics (e.g., mean, max, sum). For example, GCN \cite{GCN} uses mean aggregator, GraphSAGE \cite{GraphSAGE} uses max aggregator, and GIN uses sum aggregator. And some state-of-the-art GNNs (e.g. GCNII \cite{GCN2}, DAGNN \cite{DAGNN}) still use single statistics, without retaining the complete information of neighbor's feature distribution. Recently,  researchers \cite{GIN,du2021gbk, ma2022meta, luo2022ada} begin to study the modeling of neighborhood distribution and have discussed its importance  for graph data from various perspectives. However, the studies of neighbor's feature distribution are limited to lower-order characteristics without considering higher-order distribution characteristics.
	
	Considering that single statistic cannot represent complex distribution with more than one degree of freedom,  it's intuitive that we should introduce more higher-order statistics. To demonstrate the prominent effects of higher-order distribution characteristics on distinguishing different nodes, we present an example from Facebook Social Network \cite{facebook100} in Fig.~\ref{fig:1}.  Considering the graph composed of student/teacher nodes with class year as attributes, Fig.~\ref{fig:1} (b) shows the mean and variance of their neighbors' features. And we cannot distinguish the label of node A and node B by the mean of their neighbors' features. However, we can succeed to distinguish the two nodes if we consider the variance of their neighbors' features. And it's intuitional because the teachers are more likely to know students from different class years, thus the variance of  teacher's neighbors is larger. And this motivates us to consider higher-order characteristics of neighbor's feature distribution for graph data, and we further demonstrate the critical roles of  higher-order characteristics by analysis on real data in Section~\ref{section:findings}.
	
	\begin{figure}[h]
		\centering
		\includegraphics[width=0.9\linewidth, height=91pt]{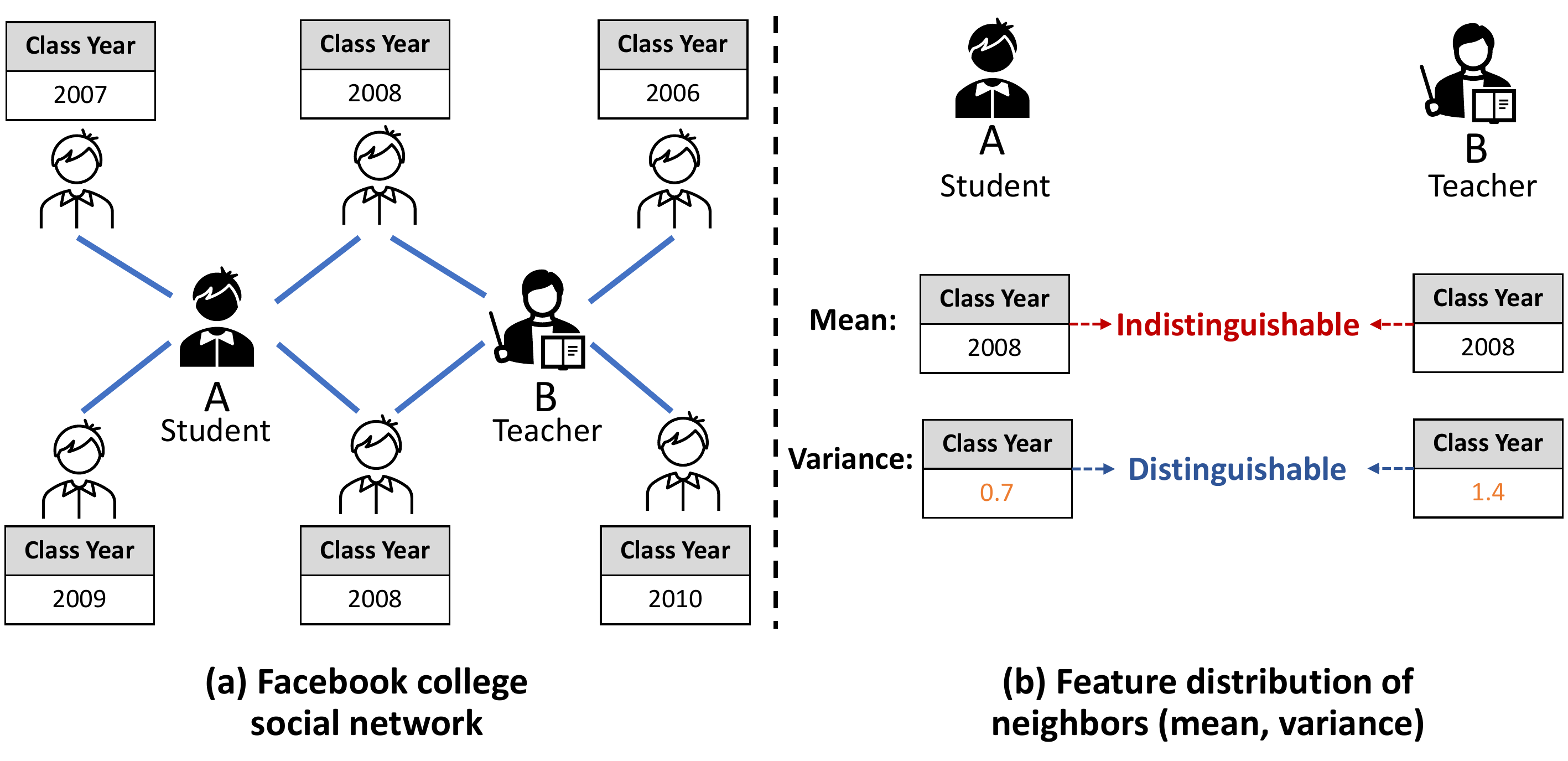}
		\caption{An example in Facebook social network. (a) is an example of Facebook college social network, where the student node A and the teacher node B have different neighbors set. (b) shows the feature distributions of neighbors of  A and  B by their mean and variance, where the mean information fails to distinguish the two nodes. 
		}
		\label{fig:1}
		\Description{A toy example of Facebook social network.}
	\end{figure}
	Inspired by the method of moments \cite{momentmethod}, which is used to represent unknown probability distributions, we introduce methods of moments to design a novel aggregator for GNNs.
	Specifically, we propose to use \textbf{adapted multi-order moments} to represent the neighbor's feature distribution and further embed the distribution signals into the message passing mechanism of GNNs. Then we design a novel  Mix-Moment Graph Neural Network (MM-GNN) model, which includes two major components: Multi-order Moments Embedding (MME) and Attention-based Moment Adaptor (AMA). MM-GNN first computes multi-order moments for neighbors of each node as their signatures. Based on the observations that different order of moments may have various impact for each node (Sec~\ref{section:findings}), an  Element-wise Attention-based  Moment Adaptor is designed to select important moment signatures for specific nodes adaptively. As it is impossible to compute infinite-orders of moments to represent the distribution perfectly, we only compute finite-order moments in parallel at each graph convolution year. Finally, we conduct extensive experiments  to demonstrate the superiority of MM-GNN over other state-of-the-art methods on 15 real-world graphs, including social networks, citation networks, webpage networks and image networks. The improved performance on all types of graphs demonstrate the effectiveness of our method.
	
	The main contributions of this paper are summarized as follows:
	\begin{enumerate}
		\item We demonstrate that higher-order statistics are needed to model neighbor's feature distribution on graphs. And we introduce the method of moments to GNNs from the perspective of modeling neighborhood feature distributions.
		\item We design a novel MM-GNN model with an aggregator based on the multi-order moments and further design an Element-wise Attention-based Moment Adaptor to adaptively adjust the weights of multi-order moments for each node.
		\item We conduct extensive experiments on 15 real-world  graphs (i.e., social networks, citation networks, webpage networks) to evaluate our method. The results show that MM-GNN gains consistent improvements on each type of graphs. 
	\end{enumerate}
	
	\section{Preliminaries}
	In this section, we  give the definitions of some important terminologies and concepts appearing in this paper.
	\subsection{Graph Neural Networks (GNNs)}
	GNNs aim to learn representation for nodes on the graph. Let $G=(V,E)$ denotes a graph, where $V=\{v_1, \cdots, v_n \}$ is the node set, $n=|V|$ is the number of nodes. Let $X\in \mathbb{R}^{n\times d}$ denote the feature matrix and the $l$-th row of $X$ denoted as $x_i$ is the $d$-dimensional feature vector of node $v_i$. $E$ is the edge set in $G$.  Typically, GNN-based model follows a neighborhood aggregation framework, where node representation is updated by aggregating information of its first or multi-order neighboring nodes. Let $h_i^{(l)}$ denotes the output vector of node $v_i$ at the $l$-th hidden layer and let $h_i^{(0)}=x_i$. The update function of $l$-th hidden layer is:
	\begin{equation}
	h_i^{(l)} = \text{COMBINE}\left(h_i^{(l-1)}, \text{AGGREGATE}\big(\{h_j^{(l-1)} | v_j \in \mathcal{N}(v_i) \}\big)\right),
	\end{equation}
	where $\mathcal{N}(v_i)$  is the set of neighbors of $v_i$.  The AGGREGATE function is aimed to gather information from neighbors and the goal of COMBINE function is to fuse the information from neighbors and the central node. For graph-level tasks, an additional READOUT function is required. 
\subsection{Method of Moments}
\label{section:moment_methods}
\subsubsection{Definitions about moments: } We give some basic definitions of statistical moments \cite{moment,momentmethod} in this section. First, the definition of moments is as follows:
    \begin{definition}[Moments in Mathematics]
		For a random variable $Z$, its $k$-th order \textbf{origin moment} is denoted as $\mathbb{E}(Z^k)$
		, where $\mathbb{E}(Z^k)=\int^\infty_{-\infty} z^k\cdot f(z;\theta_1, \cdots, \theta_k)dz$. The $k$-th order \textbf{central moment} of $Z$ is ${\small\mathbb{E}\left(\big(Z-\mu_z\big)^k\right)}=\int^\infty_{-\infty} (z-\mu_z)^k\cdot f(z;\theta_1, \cdots, \theta_k)dz$. The $k$-th order \textbf{standardized moment} of $Z$ is ${\small\mathbb{E}\left(\frac{\big(Z-\mu_z\big)^k}{\sigma_z^k}\right)}=\int^\infty_{-\infty} \frac{(z-\mu_z)^k}{\sigma_z^k}\cdot f(z;\theta_1, \cdots, \theta_k)dz$, where $\sigma_z$ is the standard deviation of random variable Z.
    \end{definition}
	Then, moments can be derived from the Moment-Generating Function (MGF), which is an alternative specification of its probability distribution. The MGF~\cite{moment} is defined as follows:
	\begin{definition}[Moment-Generating Function (MGF)]
		Let $X$ be a random variable with cumulative distribution function $F_X$. The Moment Generating Function (MGF) of $X$ (or $F_X$), denoted by $M_X(t)$, is $ M_X(t) = \mathbb{E}(e^{tX}) = \int_{-\infty}^{\infty}e^{tx}dF_X(x)$
	\end{definition}
	An essential property is that MGF can uniquely determine the distribution. The Taylor Expansion of MGF is as follows:
	\begin{equation}
	\label{eq:taylor}
	M_X(t) = \mathbb{E}(e^{tX}) = \sum_{k=0}^\infty \frac{t^k\mathbb{E}(X^k)}{k!} = \sum_{k=0}^\infty \frac{t^km_k}{k!}
	\end{equation}
	The $k$-th term of Eq.~\ref{eq:taylor} is exactly the $k$-th order origin moment.

	\subsubsection{Relationship between moments and common statistics:}
    \label{sec:rel_moment_statistic}
	In order to intuitively explain the physical meaning of multi-order moments, we give the relationship between moments  and some common distribution statistics of data. The common distribution statistics can be viewed as \textbf{special cases of moments}. For a random variable $Z$, the \textbf{expectation} of $Z$ is {\small$\mathbb{E}(Z)$}, which is the 1-st order origin moment. The \textbf{variance} of $Z$ is {\small$\mathbb{E}\left(\big(Z-\mu_z\big)^2\right)$}, which is  the 2-nd order central moment. The \textbf{skewness} of $Z$ is {\small$\mathbb{E}\left(\frac{\big(Z-\mu_z\big)^3}{\sigma_z^3}\right)$}, which is the 3-rd order standardized moment. 
	
	\section{Findings: neighbor distribution benefits node classification}
	
	\label{section:findings}

	In this section, We verify the importance of neighbor's feature distribution through comprehensive data analysis.
	We first select three common statistics (i.e., Mean, Variance, Skewness, see Sec.~\ref{sec:rel_moment_statistic}) to represent distribution of neighbor's feature. Then we calculate the selected statistics on the graphs of Facebook social network~\cite{facebook100}, 
	where the feature of nodes on the graphs have practical meanings (e.g., gender, major, year) and easy to analyze the physical meanings of their statistics. More details of the datasets are presented in Sec.~\ref{subsec:dataset}.
	And we further calculate the selected statistics on the neighbors' feature for all nodes and then take an average on all feature dimensions. Finally, we choose two metrics (i.e. Fisher Discrimination Index, Mutual Information) to 
 measure the correlation between  statistics of neighbor distribution
 and the node label, and one statistic is more important for node classification if it has stronger correlations with the node label.
	
	\subsection{Data Analysis on Correlation between Neighbor Distribution and Label}
	\begin{figure}[h]
		\begin{minipage}[t]{0.8\linewidth}
			\centering
			\subfloat[Fisher Discrimination Index.]{\includegraphics[width=\linewidth]{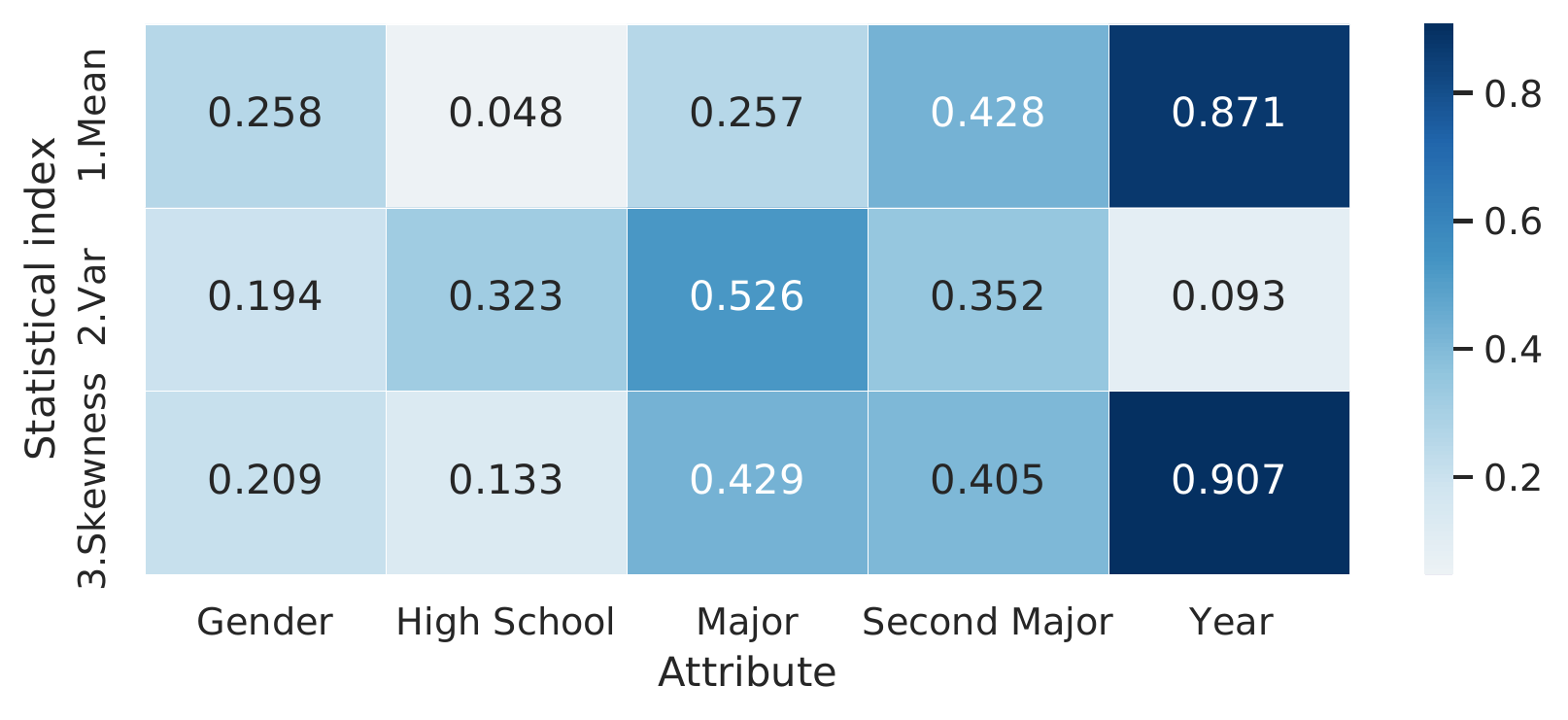}}
		\end{minipage}%
		\qquad\qquad
		\begin{minipage}[t]{0.8\linewidth}
			\centering
			\subfloat[Mutual Information.]{\includegraphics[width=\linewidth]{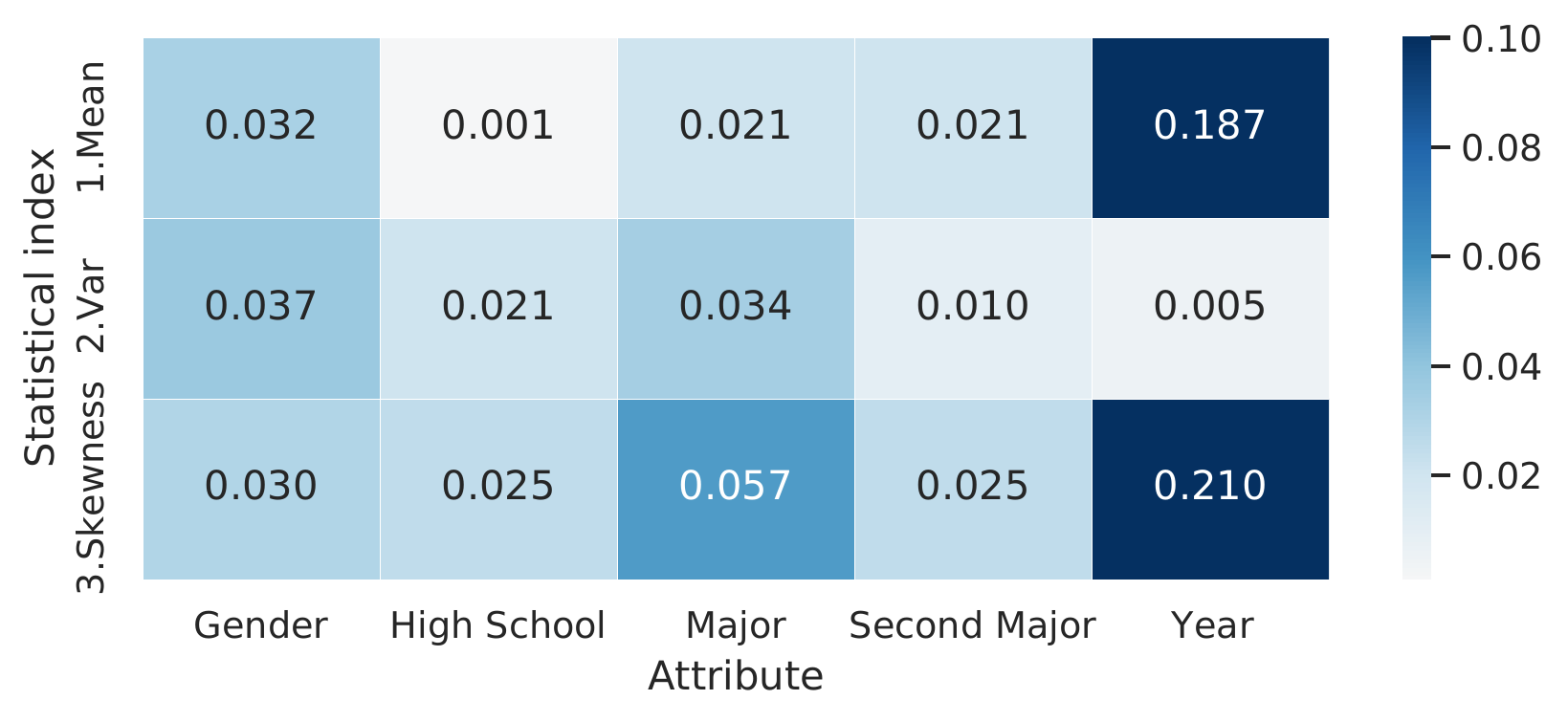}}
		\end{minipage}
		\caption{Correlation analysis between the node label and neighbor's feature distribution represented by statistics (i.e., Mean, Variance, Skewness) on the Facebook Social Networks.}
		
		\label{fig:heatmap_data}
	\end{figure}
	
	\subsubsection{Analysis by Fisher Discrimination Index}
	
	Inspired by Fisher's Linear Discrimination~\cite{fisher1936use}, we also use the Fisher discrimination index (Ratio of inter-class variance to intra-class variance) to measure the node classification capability distinguished by different statistics of neighbors' features distribution (e.g., Mean, Variance):
	\begin{equation}
    \small
    \nonumber
	S_{Fisher} = \frac{\sigma^2_{between}}{\sigma^2_{within}} =\sum_{\mathcal{C}_i, \mathcal{C}_j \in \mathcal{C} } \frac{\left(\mu_i - \mu_j\right)^2}{\sigma^2_i + \sigma^2_j},
	\end{equation}
	where $S_{Fisher}$ denotes the Fisher Discrimination Index, $\sigma^2_{between}$ and  $\sigma^2_{within}$ denote the inter-class and intra-class variation. $\mathcal{C}$ is the set of classes, $\mathcal{C}_i$ and $\mathcal{C}_j$ are two different categories, $\mu_i$ and $\sigma^2_i$ are respectively the mean and variance of samples in the class $\mathcal{C}_i$. As shown in Fig. \ref{fig:heatmap_data}~(a). We observe that, for different feature dimensions, different statistic of neighbor distribution contributes variously to distinguish node labels. For example, we observe that the skewness statistic has the best discriminative ability for the feature dimension "year", while the variance statistic has best discriminative ability for the feature dimension "major".

	\subsubsection{Analysis by Mutual Information}
	
	Apart from Fisher Discrimination Index, we also analyze the Mutual Information between distribution statistics and the node label to measure theirs correlations.
	As shown in Fig.~\ref{fig:heatmap_data}~(b), the results of mutual information also demonstrate that different statistics have different strength of correlation with the class label.  In summary, all these results show that different order of moments contribute quite differently to node classification. However, any single statistic (e.g., mean, max, sum) cannot reflect complete neighborhood feature distribution, we thus need to consider multi-order statistics such as variance, skewness and other higher-order distribution statistics. 
	
	\begin{figure*}[h]
		\centering
		\includegraphics[width=0.9\linewidth]{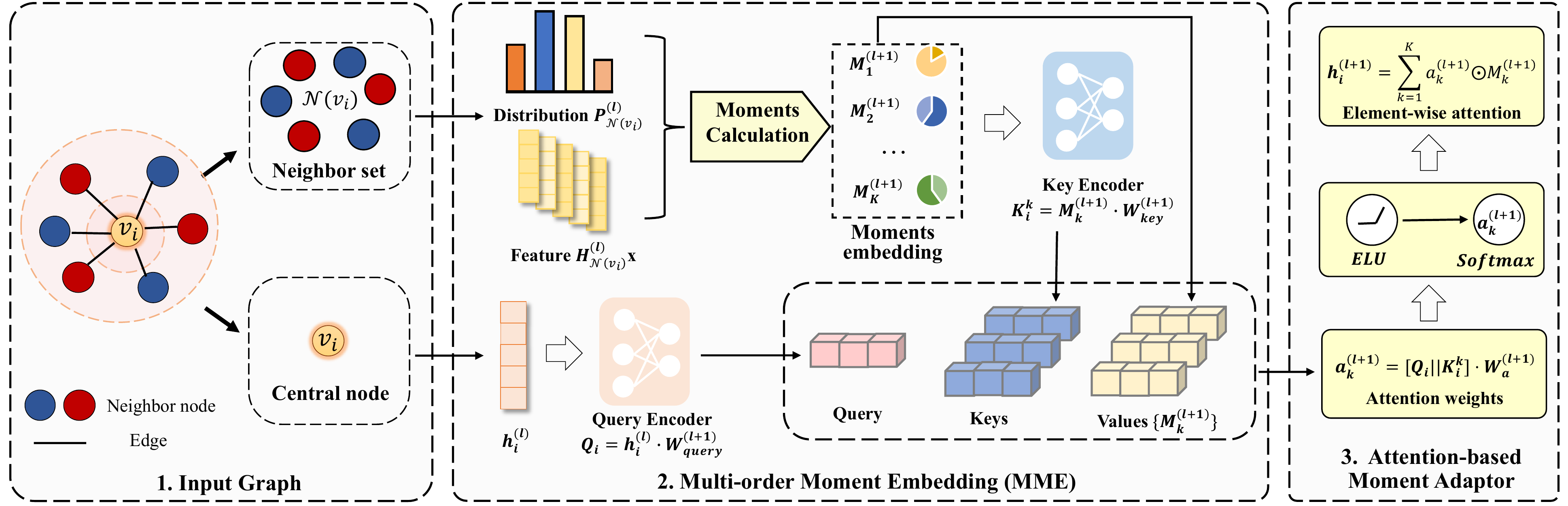}
		\caption{An overview of MM-GNN including Multi-order Moment Embedding (MME) and Attention-based Moment Adaptor (AMA) modules. {\small$H^{(l)}_{N(v_i)}$} and {\small$P^{(l)}_{N(v_i)}$} denotes the features and distributions (represented by moments) of neighbors at the $l$-th layer. {\small$h_i^{(l)}$} is the node representation of node $v_i$ at the $l$-th layer. $M_K^{(l+1)}$ denotes the $K$-th order moment of neighbors at the $(l+1)$-th layer. $Q_i$ and {\small$K_i^k$} denotes query and key vectors at the $l$-th layer. {\small$W_{query}^{(l+1)}$}, {\small$W_{key}^{(l+1)}$} and {\small$W_{query}^{(l+1)}$} are learnable transformation matrix at the $(l+1)$-th layer. {\small$a_k^{(l+1)}$} is the learned attention matrix for $k$-th order moment at the $(l+1)$-th layer.}
		\label{fig:model_structure}
		\Description{Figure of model structure}
	\end{figure*}
	


	\section{Mix-Moment Graph Neural Network}
	In this section, we introduce our Mix-Moment Graph Neural Network (MM-GNN) model. An overview of MM-GNN is given in the Fig.~\ref{fig:model_structure}. Compared with existing GNNs, our MM-GNN introduces the Method of Moments to better represent neighborhood feature distribution when updating node representations. 

	\subsection{Modeling Neighbors' Feature Distribution via Method of Moments}
	Based on the foundations of method of moments given in Sec.~\ref{section:moment_methods},  we introduce multi-order moments to represent the feature distribution of neighboring nodes on the graph.
	And based on the analysis in Sec.~\ref{section:findings}, we observe that, for the node classification task, the feature distribution of neighbors for the nodes belonging to the same class follow the similar distributions, and vice versa. Unlike other parametric methods, the outstanding power of estimating distribution with method of moments is that we do not need to assume an explicit form of the distribution function.   In this paper, we propose to use finite-order moments to represent the neighbors' feature distribution for a graph, and we further present the analysis on the theoretical and empirical deviation of representing the neighbors' feature distribution for graph data with finite-order moments in Sec~\ref{sec:taylor_error}.
	
	\subsection{Multi-order Moments Embedding (MME)}
	As aforementioned, we model neighbors' feature distribution by introducing the Method of Moments to GNN. The Method of Moments illustrated in Section \ref{section:moment_methods} is a statistical method that use sample moments to estimate the distribution function.
    Therefore, we use multi-order moments to estimate the feature distribution of neighboring nodes. Moreover, moments of different nodes can be computed in parallel at the aggregation step, and we select both origin moment and central moment to represent neighborhood feature distribution. Here we present the form of $k$-th order origin moment for $(l+1)$-th layer of MM-GNN:
	\begin{equation}
	\label{equation:origin_moment}
	M_k^{(l+1)}(v_i) = \mathbb{E}\big((h_j^{(l)})^k\big) = \left( \frac{1}{|\mathcal{N}(v_i)|}\sum_{j\in \mathcal{N}(v_i)} ((h_j^{(l)})^k \right)^{\frac{1}{k}} \cdot W_k^M
	\end{equation}
	where $W_k^M \in \mathbb{R}^{D_{in}\times D_{hidden}}$ is the transformation matrix for $k$-th order moment signature, $h_j^{(l)}\in\mathbb{R}^{D_{in}}$ is the node representation of  $v_j$ output by the $l$-th layer, and $h_j^{0}$ is the raw node feature $x_j$. Based on the Eq.~\ref{equation:origin_moment}, we can observe that the $1$-st origin moment is exactly the expectation, which is equivalent to the MEAN aggregation strategy. Thus previous GNNs with MEAN aggregator are special cases of MM-GNN using $1$-st order origin moment.
	
	Similarly, we can give the form of $k$-th order central moment for $(l+1)$-th layer of GNNs:
	\begin{equation}
	M_k^{(l+1)}(v_i) = \mathbb{E}\big((h_j^{(l)} - \mu_i^{(l)})^k\big) = \left( \frac{1}{|\mathcal{N}(v_i)|}\sum_{j\in \mathcal{N}(v_i)} (x_j - \mu_i^{(l)})^k \right)^{\frac{1}{k}}\cdot W_k^M
	\end{equation}
	, where
	\begin{equation}
	\mu_i^{(l)}  = \frac{1}{|\mathcal{N}(v_i)|}\sum_{j\in \mathcal{N}(v_i)} h_j^{(l)}
	\end{equation}
	To keep the same order of magnitude, we normalize moments of different orders by computing the $\frac{1}{k}$ power of $k$-th order moment vectors in the model. Besides, we set a hyper-parameter for flexible model configuration of using the central moment or origin moment.

	\subsection{Attention-based Moment Adaptor (AMA)}
	With the Multi-order Moments Embedding (MME) module, we get signatures representing moments of different orders for each node. Then, we need to fuse these signatures into one final representation for each node. As illustrated in Section \ref{section:findings}, different statistics (corresponding to moments) have different effects on node classification, we design an Element-wise Attention-based Moment Adaptor (AMA) that can  adaptively select the important order of moments for different nodes and feature dimensions. For each MM-GNN layer, after we get the multi-order moment signatures with the MME module, we use the transformed input node representation (the output of the previous layer) as the query vector for each node, and then we use different moment-signatures (output of the MME module of current layer) as key vectors. Then we compute attentions as weights of different moments for each individual nodes. The attention of $k$-th order moment for $v_i$ at $(l+1)$-th layer, denoted by $a_k^{(l+1)}(v_i)\in \mathbb{R}^{D_{hidden}}$, is:
	\begin{equation}
	a_k^{(l+1)}(v_i) = \sigma\left([h_i^{(l)} \cdot W_{query}^{(l+1)} || M_k^{(l+1)}(v_i) \cdot W_{key}^{(l+1)} ]\cdot W_a^{(l+1)}  \right)
	\end{equation}
	where $W_a^{(l)}\in\mathbb{R}^{2D_{hidden}\times D_{hidden}}$ is the learnable attention matrix in $l$-th layer, $W_{query}^{(l)}, W_{key}^{(l)}\in\mathbb{R}^{D_{hidden}\times D_{hidden}}$  are the transformation matrix for the query and key vectors in $l$-th layer. Then different dimensions of $k$-th order moment signature for node $v_i$ use different attention weights and therefore implement an element-wise attention mechanism (attention of each order moment is a $D_{hidden}$-dimensional vector rather than a scalar). Then we can get the output representation of node $v_i$ at $(l+1)$-th layer:
	\begin{equation}
	h_i^{(l+1)} = \sum_{k=1}^K a_k^{(l+1)}(v_i) \odot  M_k^{(l+1)}(v_i) ,\ a_k^{(l+1)}(v_i)\in\mathbb{R}^{1\times D_{hidden}}
	\end{equation}
	where $\odot$ is the element-wise product and $K$ is the hyper-parameter for the largest ordinal of moments used in MM-GNN. Besides, we add residual connections across layers to further boost the performance of MM-GNN by aggregating neighborhood feature distribution of multi-hop neighbors. We finally use the output of the last GNN layer as the node representation  and further compute Cross Entropy Loss for node classification  task and back propagate to optimize the model parameters.

	\begin{table*}
        \small
		\setlength{\tabcolsep}{6.0pt}
		\caption{Information of Facebook social networks. We present the number of nodes, number of edges for each social graph in this table. Each node has 6-dimensional attributes and belongs to 6 possible classes.}
		\label{tab:data_info}
		\begin{tabular}{cccccccccc}
			\toprule
			Dataset & Northeastern & Caltech & UF & Hamilton & Howard & Simmons & GWU & UGA & Tulane  \\ 
			\midrule
			\#Nodes & 13882 & 769 & 35123 & 2314 & 1909 & 1518 & 12193 & 24389 & 7752 \\
			\#Edges& 763868 & 33312& 2931320 & 192788 & 409700 & 65976& 939056 &  2348114 & 567836 \\
			\bottomrule
		\end{tabular}
	\end{table*}
	\begin{table}
        \small
		\caption{Information of citation/webpage/image networks}
		\setlength{\tabcolsep}{6.0pt}
		\label{tab:citation_info}
		\begin{tabular}{ccccc}
			\toprule
			Dataset & \#Nodes & \#Edges & \#Features & \#Classes  \\
			\hline
			Cora & 2,708 & 10,556 & 1,433& 7\\
			CiteSeer & 3,327&9,104 &3,703 &6 \\
			PubMed &19,717 &88,648 &500 & 3\\
			Chameleon & 2277 & 36101 & 2325 & 5 \\
			Squirrel & 5201 & 217073 & 2089 & 5 \\
			Flickr & 89,250 & 899,756 & 500 & 7 \\
			\bottomrule
		\end{tabular}
		
	\end{table}
	
	\section{Deep Analysis of MM-GNN}
	\label{sec:theory}
	In this section, we first theoretically analyze the limitations of existing GNNs using single statistic, which cannot retain the full information of neighbor distribution. Then we analyze the deviation of estimating neighbors' feature distribution with finite-order moments. Finally, we analyze the time complexity of MM-GNN.
    
    \subsection{Limitations of Existing GNNs}
	\label{section:drawback_gnn}
    In this sections, we prove the existing GNNs using single statistic (e.g. GCN, GraphSAGE) have severe limitations when representing neighbor's feature distribution, which reduces the generalization ability of the model. Complexity measure \cite{neyshabur2017exploring} is the current mainstream method to measure the generalization ability of the model. And we choose Consistency of Representations \cite{complexity} (champion of the NIPS2020 Competition on generalization measure). And the complexity measure of the model, denoted as $\Gamma$,  is:
    \begin{equation}
    \label{eq:complexity}
        \Gamma = \frac{1}{|\mathcal{C}|}\sum_{i=1}^{|\mathcal{C}|} \max_{i\neq j}\frac{\mathcal{S}_i + \mathcal{S}_j }{\mathcal{M}_{i,j}},
    \end{equation}
    where $\mathcal{C}$ is the set of classes, $\mathcal{C}_i, \mathcal{C}_j \in \mathcal{C}$  are two different classes, $\mathcal{S}_i=\left(\mathbb{E}_{v_k\sim\mathcal{C}_i}(|h_k - \mu_{\mathcal{C}_i}|^p)\right)^{\frac{1}{p}}$ is the intra-class variance of class $\mathcal{C}_i$,  $\mathcal{M}_{i,j}=||\mu_{\mathcal{C}_i} - \mu_{\mathcal{C}_j}||_p$ is the inter-class variance between $\mathcal{C}_i$ and $\mathcal{C}_j$, $h_k$ is the representations of nodes $v_k$ learned by the model and $\mu_{\mathcal{C}_i} = \mathbb{E}_{v_k\sim\mathcal{C}_i}(h_k)$. And \textbf{higher complexity measure means lower generalization ability}. Then we demonstrate that GNNs using single statistic have weak generalization ability. Take GCN~\cite{GCN} as an example, we give the following theorem:
    \begin{theorem}
    \label{theorem:limitation}
        Given a graph $G(V,E)$, we denote the nodes belonging to class $\mathcal{C}_i$ as $\{v_i | v_i\in \mathcal{C}_i\}$, and assume that the neighbor's feature distribution of nodes in $\mathcal{C}_i$ follows the same i.i.d Gaussian distribution $\mathrm{N}(\bm{\mu}_i, \bm{\Sigma}_i)$. Given any two classes $\mathcal{C}_i\neq \mathcal{C}_j$, if $\bm{\mu}_i = \bm{\mu}_j$ and $\bm{\Sigma}_i\neq \bm{\Sigma}_j$ , then the complexity measure $\Gamma$ (Eq.\ref{eq:complexity}) of GCN with mean aggregation reaches $+\infty$, and the GCN thus lose the generalization ability.
    \end{theorem}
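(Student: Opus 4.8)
The plan is to exploit the defining weakness of mean aggregation: a GCN representation depends on the neighborhood only through the \emph{empirical mean} of the neighbors' features, so the entire covariance structure $\bm{\Sigma}_i$ is discarded before it can influence $h_k$. First I would write one GCN layer in its mean-aggregation form, $h_k = \big(\tfrac{1}{|\mathcal{N}(v_k)|}\sum_{j\in\mathcal{N}(v_k)} x_j\big)\cdot W$, and observe that for a node $v_k\in\mathcal{C}_i$ the aggregated vector is the sample mean of $|\mathcal{N}(v_k)|$ i.i.d.\ draws from $\mathrm{N}(\bm{\mu}_i,\bm{\Sigma}_i)$. Taking the class-level expectation then gives $\mu_{\mathcal{C}_i}=\mathbb{E}_{v_k\sim\mathcal{C}_i}(h_k)=\bm{\mu}_i\cdot W$, and identically $\mu_{\mathcal{C}_j}=\bm{\mu}_j\cdot W$.

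The key step is immediate from the hypothesis $\bm{\mu}_i=\bm{\mu}_j$: the two class centroids coincide, $\mu_{\mathcal{C}_i}=\mu_{\mathcal{C}_j}$, so the inter-class separation in Eq.~\ref{eq:complexity} collapses, $\mathcal{M}_{i,j}=\|\mu_{\mathcal{C}_i}-\mu_{\mathcal{C}_j}\|_p=0$. This is precisely where the single-statistic limitation bites: the genuine difference between the classes lives entirely in $\bm{\Sigma}_i\neq\bm{\Sigma}_j$, which mean aggregation never observes.

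Next I would show the numerator $\mathcal{S}_i+\mathcal{S}_j$ is strictly positive, so that the ratio is not an indeterminate $0/0$. Since each $h_k$ is a sample mean of finitely many draws from a non-degenerate Gaussian (positive-definite $\bm{\Sigma}_i$), the representations fluctuate across nodes of the same class with covariance $W^{\top}(\bm{\Sigma}_i/|\mathcal{N}(v_k)|)W$; for a generic (nontrivial) $W$ this scatter does not vanish, so $\mathcal{S}_i=\big(\mathbb{E}_{v_k\sim\mathcal{C}_i}(|h_k-\mu_{\mathcal{C}_i}|^p)\big)^{1/p}>0$, and likewise $\mathcal{S}_j>0$. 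Combining the two pieces, the summand for the pair $(i,j)$ is $\tfrac{\mathcal{S}_i+\mathcal{S}_j}{\mathcal{M}_{i,j}}=\tfrac{(\text{positive})}{0}=+\infty$; since this pair is admissible in the inner $\max_{i\neq j}$, that maximum is $+\infty$, and averaging over classes leaves $\Gamma=+\infty$, so the generalization measure diverges.

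The hard part will not be any single calculation but pinning down the modeling assumptions cleanly. I must justify that the learned $h_k$ is a function of the neighbor mean alone, so that $\mu_{\mathcal{C}_i}=\mu_{\mathcal{C}_j}$ holds \emph{exactly} rather than approximately, and I must simultaneously guarantee the numerator does not also collapse. The safe route is to keep the number of neighbors finite, so that $\mathcal{S}_i$ stays bounded away from zero while $\mathcal{M}_{i,j}$ is exactly zero; passing instead to the infinite-neighbor population limit would drive both numerator and denominator to zero and force a more delicate ratio argument. I would therefore state explicitly that the degeneracy $\mathcal{M}_{i,j}=0$ is exact under $\bm{\mu}_i=\bm{\mu}_j$ while $\mathcal{S}_i>0$ survives, which is exactly what pushes the ratio, and hence $\Gamma$, to $+\infty$.
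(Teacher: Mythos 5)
Your proposal is correct and follows essentially the same route as the paper's own proof: a single linear mean-aggregation layer sends both classes to the same centroid $\bm{\mu}_i \cdot W = \bm{\mu}_j \cdot W$, so $\mathcal{M}_{i,j}=0$ while $\mathcal{S}_i+\mathcal{S}_j>0$, and the ratio in Eq.~\ref{eq:complexity} makes $\Gamma=+\infty$. Your added care in justifying strict positivity of the numerator (finite neighborhoods, non-degenerate $\bm{\Sigma}_i$, nontrivial $W$) tightens a step the paper merely asserts, but does not change the argument.
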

	\begin{proof}
        The update function of GCN with mean aggregator is:
        \begin{equation}
        \nonumber
	h^{(l+1)}_k  =\sum_{j\in \mathcal{N}(v_k)}  \frac{1}{d_k}\cdot h_j^{(l)} \cdot W^{(l)} = \left(\frac{1}{d_k}\cdot\sum_{j\in \mathcal{N}(v_k)}   h_j^{(l)}\right) \cdot W^{(l)}
        \end{equation}
        To simplify the analysis, we omit the activation function. And we can get the expectation of $h^{(l+1)}_k$ for nodes in $\mathcal{C}_i$ and $\mathcal{C}_j$:
        \begin{equation}
        \label{eq:equal_expectation}
            \mathbb{E}_{v_k\sim\mathcal{C}_i}(h_k^{(l+1)}) = \bm{\mu}_i\cdot W^{(l)} = \mathbb{E}_{v_k\sim\mathcal{C}_j}(h_k^{(l+1)}) = \bm{\mu}_j\cdot W^{(l)} ,
        \end{equation}
        which means the expectation of the learned representations of nodes in $\mathcal{C}_i$ equals to that of nodes in $\mathcal{C}_j$. Then according to Eq.~\ref{eq:equal_expectation}:
        \begin{equation}
            \nonumber
        \mathcal{M}_{i,j} = ||\mathbb{E}_{v_k\sim\mathcal{C}_i}(h_k) - \mathbb{E}_{v_k\sim\mathcal{C}_j}(h_k) ||_p = \bm{0}
        \end{equation}
        Considering that $\mathcal{S}_i + \mathcal{S}_i > 0$, the complexity measure $\Gamma$ in Eq.~\ref{eq:complexity} equals to $+\infty$. Then GCN loses generalization ability, and this rule also applies to any other GNNs with mean aggregator.
	\end{proof}
    And we can intuitively generalize Theorem~\ref{theorem:limitation} to all GNNs using single statistics. For example, GraphSAGE~\cite{GraphSAGE} use the max/mean aggregator, GIN~\cite{GIN} uses the sum aggregator. And most state-of-the-art GNNs (e.g., GCNII~\cite{GCN2}, DAGNN \cite{DAGNN}) still use single statistic to represent the neighbor's feature distribution , and therefore have the same limitations when neighbor's feature distribution has more than one degree of freedom.
	
    \subsection{Deviation Analysis on Finite-order Moments.}
	\label{sec:taylor_error}
     We give the theoretical guarantee of estimating neighborhood feature distributions on the graph with finite-order moments by a theorem and give its proof as follows:
	\begin{theorem}
		\label{theorem:finite-order-moments}
         Given a graph $G$, we denote the deviation  of using top-$K$ order moments to represent the neighbor's feature distribution as $e_K$, and $e_K$ wil not larger than an upper bound: $e_K \leq \frac{(\epsilon\cdot c)^{k+1}}{(k+1)!}$,
      where $c\in \mathbb{R}^+$ is a constant and $\epsilon\in \mathbb{R}$ is a small number around 0. And larger $K$ leads to smaller estimation deviation (convergence to 0).
	\end{theorem}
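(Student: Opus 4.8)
The plan is to read the ``deviation'' through the Moment-Generating Function, since the property stated just after Eq.~\ref{eq:taylor} guarantees that the MGF uniquely determines the distribution. Representing a neighbor's feature distribution by its top-$K$ moments is then equivalent to approximating $M_X(t)$ by the degree-$K$ truncation of its Taylor series around $t=0$, and the deviation $e_K$ is exactly the Taylor remainder, evaluated at a small argument $t=\epsilon$ near $0$. Concretely, I would start from $M_X(\epsilon)=\sum_{k=0}^{\infty}\frac{\epsilon^k m_k}{k!}$ (Eq.~\ref{eq:taylor}), define the top-$K$ representation as the partial sum $\sum_{k=0}^{K}\frac{\epsilon^k m_k}{k!}$, and set $e_K=\big|\sum_{k=K+1}^{\infty}\frac{\epsilon^k m_k}{k!}\big|$.

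Next I would invoke Taylor's theorem with the Lagrange form of the remainder. Because $M_X(t)=\mathbb{E}(e^{tX})$ is smooth on a neighborhood of $0$ (the standing assumption whenever one manipulates moments through the MGF), there exists $\xi$ between $0$ and $\epsilon$ with $e_K=\frac{\epsilon^{K+1}}{(K+1)!}\,|M_X^{(K+1)}(\xi)|$. The key observation is that differentiating under the expectation gives $M_X^{(K+1)}(\xi)=\mathbb{E}\big(X^{K+1}e^{\xi X}\big)$, so all that remains is to control this single derivative.

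For the bounding step I would use that node features on a graph lie in a bounded range, so there is a constant $c\in\mathbb{R}^+$ with $|X|\le c$ almost surely; since $\epsilon$ is small around $0$, the factor $e^{\xi X}$ is close to $1$ (and in any case bounded by $e^{|\epsilon|c}$). This yields $|M_X^{(K+1)}(\xi)|\le c^{K+1}$ and hence $e_K\le\frac{(\epsilon c)^{K+1}}{(K+1)!}$, matching the claimed bound (with $K$ in place of the $k$ appearing in the statement). Convergence then follows because the factorial $(K+1)!$ eventually dominates the geometric factor $(\epsilon c)^{K+1}$, so $e_K\to 0$ as $K\to\infty$.

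The main obstacle is not the calculus but the modeling choice: making precise what ``deviation of representing the distribution'' means and arguing that controlling the MGF-approximation error controls it. The cleanest resolution is to adopt the MGF viewpoint directly, as the paper already does via Eq.~\ref{eq:taylor}, and to state explicitly the boundedness-of-features hypothesis that makes $\mathbb{E}(X^{K+1}e^{\xi X})$ controllable; the residual subtlety of the $e^{\xi X}$ term is absorbed by the hypothesis that $\epsilon$ is near $0$, which the statement already grants.
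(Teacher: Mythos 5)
Your proposal follows essentially the same route as the paper's proof: both read the top-$K$ moment representation as the degree-$K$ Taylor truncation of the MGF around $t=0$ evaluated at a small argument $t=\epsilon$, bound the truncation error via the Lagrange remainder under the bounded-features assumption $|X|\le c$, and conclude $e_K \le \frac{(\epsilon c)^{K+1}}{(K+1)!} \to 0$. If anything you are slightly more careful than the paper, which labels $\frac{t^{k+1}\mathbb{E}(X^{k+1})}{(k+1)!}$ (really the first omitted series term) as the Lagrange remainder, whereas you correctly write $M_X^{(K+1)}(\xi)=\mathbb{E}\big(X^{K+1}e^{\xi X}\big)$ and explicitly absorb the extra $e^{\xi X}$ factor using the smallness of $\epsilon$.
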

	\begin{proof}
		According to the Sec.~\ref{section:moment_methods}, the $k$-th order moment can be derived from the $k$-th term of the MGF's Taylor Expansion (see eq.~\ref{eq:taylor}). Therefore, the top-$K$ order moments can be viewed as an estimation of the $K$-th order Taylor expansion for MGF of neighbor's feature distribution. In real cases, the  feature of samples is usually bounded. We assume that for $\forall x_{i,j} \in X$, we have $|x_{i,j}| \leq c$, where $c\in \mathbb{R}^+$ is a constant. Besides, sine the Eq.~\ref{eq:taylor} is the Taylor expansion around $t=0$, then $t$ is a small real number $t\leq \epsilon$, where $\epsilon \in \mathbb{R}$ is a small number around 0. Then the deviation of estimating neighbor's feature distribution by  $k$-th order Taylor expansion of MGF can be approximated by its Lagrange Remainder:
		\begin{equation}
		R_k(x) = \frac{t^{k+1}\mathbb{E}(X^{k+1})}{(k+1)!}  \leq \frac{(t\cdot c)^{k+1}}{(k+1)!} \leq \frac{(\epsilon\cdot c)^{k+1}}{(k+1)!}
		\end{equation}
		Besides, in real scenarios, we usually normalize the data, so the upper bound of the estimation error $\frac{(t\cdot c)^{k+1}}{(k+1)!} $ is further reduced. And it's obvious that the upper bound of estimation error converges to zero as the moment ordinal $k$ increases.
	\end{proof}

	\subsection{Time Complexity Analysis}
	Here we analyze the time complexity of training MM-GNN. Let $D_{in}$ denote the dimension of input node feature, and $D_{hidden}$ denote the hidden dimension of node representation at each layer. $|V|$ and $|E|$ denotes the number of nodes on the graph. For each layer of MM-GNN, the aggregation and transformation of graph convolution cost $O\left((|V|+|E|)\cdot D_{in}D_{hidden}\right)$. Let $k$ denote the max ordinal of used moments, and we used $k\leq 3$ in our experiments and thus can be seen as a small constant integer. The calculation of query vector costs $O(|V|D_{hidden}^2)$, and the calculation of all key vectors costs $O\left(k\cdot (|V|+|E|)\cdot D_{hidden}^2\right)$. The Attention-based Moment Adaptor costs $O(k|V|D_{hidden}^2)$. Considering that $D_{hidden} \leq D_{in}$ and $k$ is a constant integer, the overall time complexity of MM-GNN layer is:
	\begin{equation}
	\nonumber
	\begin{aligned}
	T &= O\left( (k+1)|V|D_{hidden}^2 + (k+1)\cdot (|V|+|E|)\cdot D_{in} D_{hidden}\right) \\
	&= O\left((|V|+|E|)\cdot D_{in} D_{hidden}\right)
	\end{aligned},
	\end{equation}
	which equals to the complexity of other mainstream GNNs. 

	\section{Experiments}
	\label{section:experiment}
	\subsection{Datasets}
	\label{subsec:dataset}
	The proposed MM-GNN is evaluated on 15 real-world graphs, including 9 social graphs from Facebook social networks, 3 citation networks (Cora, CiteSeer, PubMed), 2 webpage networks (Chameleon, Squirrel) and one image network (Flickr). The detailed information of theses  datasets are presented in Table~\ref{tab:data_info} and Table~\ref{tab:citation_info}.
	
	\textbf{Social network datasets: }
	Facebook100~\cite{facebook100} provides 100 college graphs, each graph describes the social relationship in a university. Each graph has categorical node attributes with practical meaning (e.g., gender, major, class year.). Moreover, nodes in each dataset belong to six different classes (a student/teacher status flag). 
	
	\textbf{Citation network datasets: }
	We choose three public citation network datasets (Cora, PubMed, CiteSeer) to evaluate our model. The nodes on such graphs represent papers and edges represent citation relationships between papers.
	
	\textbf{Webpage network datasets: }
	Two webpage network datasets (Chameleon, Squirrel) are used for model evaluation.  Nodes of these datasets represent articles and edges are mutual links between them. 
	
	\textbf{Image network datasets: }
	We also choose one image network (Flickr) for evaluation. The nodes represent images and edges means two images share the same metadata (e.g., location, gallery, tags).

	\begin{table*}
		\small
		\setlength{\tabcolsep}{1.0pt}
		\caption{Performance (accuracy) on node classification task ($60\%$ training set ratio) on Facebook social network datasets.}
		\label{tab:res1}
		\begin{tabular}{cccccccccc}
			\toprule
			Model & Northeastern & Caltech & UF & Hamilton & Howard & Simmons & GWU & UGA& Tulane \\
			\midrule
			GCN & $90.58\pm 0.28$ & $88.47\pm 1.91$ & $83.94\pm 0.61$ & $92.26\pm 0.35$ & $91.21\pm 0.69$ & $88.74\pm 0.61$ & $86.66\pm 0.48$ & $85.73\pm 0.51$ & $87.93\pm 0.97$\\
			GAT & $88.69\pm 0.30$ & $81.17\pm 2.15$ & $81.68\pm 0.59$ & $91.43\pm 1.25$ & $89.35\pm 0.44$ & $89.14\pm 0.57$ & $84.27\pm 0.88$ & $82.95\pm 0.75$ & $84.45\pm 1.40$\\
			GraphSAGE & $91.77\pm 0.86$ & $92.14\pm 1.38$ & $86.21\pm 0.56$ & \underline{$94.75\pm 0.35$} & $93.13\pm 0.93$ & $92.31\pm 0.92$ & $89.43\pm 0.78$ & $88.11\pm 0.39$ & \underline{$89.87\pm 0.47$}\\
			GIN  & $89.62\pm 1.10$ & $78.35\pm 4.24$ & $85.40\pm 0.71$ & $85.47\pm 6.20$ & $89.37\pm 2.05$ & $88.37\pm 1.18$ & $84.73\pm 2.80$ & $87.68\pm 0.40$ & $80.71\pm 4.78$\\
			APPNP  & $91.11\pm 0.27$ & $90.76\pm 2.38$ & $83.07\pm 0.54$ & $93.99\pm 0.47$ & $91.88\pm 1.22$ & $90.31\pm 1.20$ & $90.43\pm 0.38$ & $86.31\pm 0.41$ & $88.52\pm 0.44$\\
			JKNet  & $92.32\pm 0.39$ & \underline{$92.34\pm 1.09$} & $85.43\pm 0.71$ & $94.72\pm 0.76$ & $93.11\pm 0.78$ & $91.81\pm 0.71$ & $89.25\pm 0.65$ & $87.68\pm 0.40$ & $89.04\pm 0.43$\\
			DAGNN  & \underline{$92.43\pm 0.30$}& $91.15\pm 2.67$ & \underline{$86.71\pm 0.51$} & \underline{$94.75\pm 0.75$} & \underline{$93.44\pm 0.68$} & \underline{$92.49\pm 0.96$} & \underline{$89.83\pm 0.42$} & \underline{$88.20\pm 0.29$} & $89.29\pm 0.50$
			\\
			MM-GNN & \colorbox{light-gray}{\bm{$93.16\pm 0.32$}} & \colorbox{light-gray}{\bm{$93.28\pm 1.94$}} & \colorbox{light-gray}{\bm{$88.20\pm 0.45$}} & \colorbox{light-gray}{\bm{$95.46\pm 0.53$}} & \colorbox{light-gray}{\bm{$94.08\pm 0.95$}} & \colorbox{light-gray}{\bm{$93.61\pm 0.45$}} & \colorbox{light-gray}{\bm{$90.78\pm 0.45$}} & \colorbox{light-gray}{\bm{$89.44\pm 0.30$}} & \colorbox{light-gray}{\bm{$90.22\pm 0.33$}}
			\\
			\bottomrule
		\end{tabular}
		
	\end{table*}

	\begin{table*}
		\small
		\setlength{\tabcolsep}{6.0pt}
		\caption{Performance (accuracy) on node classification task on citation/webpage/image network datasets}
		\label{tab:res_citation}
		\begin{tabular}{ccccccc}
			\toprule
			Dataset & Cora & CiteSeer & PubMed & Chameleon & Squirrel & Flickr  \\
			\hline
			GCN & $81.13\pm0.26$ & $70.08\pm0.43$ & $78.62\pm0.63$ & $37.68\pm3.06$ & $26.39\pm0.88$ & $50.31\pm0.30$ \\
			GAT & $81.36\pm0.96$ & $70.93\pm0.70$ & $78.19\pm0.78$ & $44.34\pm1.42$ & $29.82\pm0.98$  & $50.59\pm0.26$ \\
			GraphSAGE & $80.43\pm0.39$ & $69.56\pm0.29$ & $77.47\pm0.76$ & $47.06\pm1.88$ & $35.62\pm1.21$  & $50.21\pm0.31$ \\
			GIN & $76.77\pm0.88$ & $67.67\pm0.34$ & $77.03\pm0.53$ & $32.18\pm1.98$ & $25.08\pm1.36$  & $46.65\pm0.33$  \\
			APPNP & $82.57\pm0.10$ & $70.53\pm0.25$ & $78.33\pm0.39$ & $40.44\pm2.02$ & $29.20\pm1.45$  & $49.21\pm0.37$ \\
			JKNet & $79.43\pm0.53$ & $70.36\pm 0.33$ & $77.80\pm0.82$ & $42.43\pm1.76$ & $35.52\pm1.06$  & $49.83\pm0.58$ \\
			DAGNN & \underline{$83.93\pm0.63$} & \underline{$72.56\pm1.76$} & \colorbox{light-gray}{\bm{$80.50\pm0.75$}} & {$47.29\pm0.62$} & $36.23\pm1.58$  & \underline{$50.78\pm0.62$}  \\ 
			GPR-GNN  & $81.85 \pm 0.38$ & $70.64 \pm 0.65$ & $79.78 \pm 0.43$ & \underline{$60.83 \pm 1.37$} & \underline{$49.21 \pm 1.15$} & $50.01\pm 0.53$ \\
			MM-GNN & \colorbox{light-gray}{\bm{$84.21\pm0.56$}} & \colorbox{light-gray}{\bm{$73.03\pm0.58$}} & $\underline{80.26\pm0.69}$ & \colorbox{light-gray}{\bm{$63.32\pm 1.31$}} & \colorbox{light-gray}{\bm{$51.38\pm 1.73$}}   &  \colorbox{light-gray}{$\bm{51.73\pm0.35}$}\\ 
			
			\bottomrule
		\end{tabular}
		
	\end{table*}
	
	\subsection{Baselines}
	We compare our proposed MM-GNN model against eight widely-used GNN models, including four baseline GNNs ( GCN \cite{GCN}, GAT \cite{GAT}, GraphSAGE \cite{GraphSAGE}, GIN \cite{GIN}) and four state-of-the-art GNNs (APPNP \cite{APPNP}, JKNet \cite{JKNet}, DAGNN \cite{DAGNN}, GPRGNN \cite{gprgnn}). Considering that the two webpage networks (Chameleon, Squirrel) are heterophily graph datasets unlike other datasets used in this paper, we also choose GPR-GNN~\cite{gprgnn}, which designed for graphs with heterophily as one of the baseline models.
	

	\subsection{Experimental Setup}
	We evaluate MM-GNN on the semi-supervised node classification task compared with state-of-the-art methods. For each dataset, we all run our model and other baseline models for 10 times and compute the mean and standard deviation of the results.  For Facebook social networks, we randomly generate 3 different data splits with an average train/val/test split ratio of 60\%/20\%/20\% in the Table~\ref{tab:res1}. And we further generate data splits with lower training ratio (40\%,20\%,10\%) and conduct experiments to validate the robustness of our model in the Table~\ref{tab:res2}. For citation networks (Cora, Citeseer, and Pubmed), we use the public split recommended by \cite{GCN} with fixed 20 nodes per class for training, 500 nodes for validation, and 1000 nodes for testing. For webpage networks (Chameleon, Squirrel) \cite{pei2020geom} and image network (Flickr \cite{zeng2019graphsaint}), we all use the public splits recommended in the original papers.
    The max moment ordinal $K$ for MM-GNN is searched in \{1, 2, 3\} in our experiments.	To keep fairness, we search the shared hyper-parameters for all models in the same searching spaces. Specifically, we train 200/400/600 epochs for each model. we search the hidden units in \{16, 32, 64, 128, 256\} and search the number of GNN layers in \{2, 6, 10, 16\}. We search learning rate from $1e-4$ to $1e-1$ and the weight decay from $1e-4$ to $1e-2$. For each experiment, we use the Adam optimizer \cite{kingma2014adam} to train each GNN model on Nvidia Tesla V100 GPU.

	\subsection{Main Results}
	\label{section:results}
	We compare the performance of MM-GNN to the state-of-the-art methods in Table~\ref{tab:res1} and Table~\ref{tab:res_citation}. Compared with all baselines, MM-GNN generally achieves the best performance on all datasets.  For social networks, we observe from Table~\ref{tab:res1} that our MM-GNN model gains significant improvement on all datasets. Besides, MM-GNN also gain significant improvements on other public real-world datasets, including citation networks, webpage networks and image networks. And the results demonstrate that MM-GNN gains superior performance on the graphs of various types.

	\subsection{Ablation Study}
	
	To estimate the effectiveness of each component in MM-GNN, we conduct comprehensive ablation studies by removing certain component at a time. The results are shown in the Table \ref{tab:ablation}.
	
	\begin{table}[h]
		\small
            \setlength{\tabcolsep}{3pt}
		\caption{Performance (accuracy) on node classification task compared with single-moment GNN models. M-1, M-2, M-3 represent the single-moment GNN models, where M-$k$ denotes the model whose aggregator only uses the $k$-th order moment of neighbors. Ensemble denotes the mean ensemble of the three single-moment models. }
		\label{tab:ablation}
		\begin{tabular}{c|ccc|c|cc}
			\toprule
			Model & M-1 & M-2 & M-3 & Ensemble & \multicolumn{2}{c}{MM-GNN }   \\
			Fusion mode & --- & --- & --- & Mean & MLP & Attention  \\
			\hline
			Howard & 93.15 & 93.61 & 93.83 & 93.70 & \underline{94.00} & \colorbox{light-gray}{\textbf{94.37}} \\
			Simmons & 90.90 & 92.36 & 92.45 & 91.85 & \underline{92.99} & \colorbox{light-gray}{\textbf{93.66}} \\
			GWU & 88.57 & 89.79 & 90.44 & 89.69 & \underline{90.48} &  \colorbox{light-gray}{\textbf{91.11}} \\
			Cora & 82.31 & 81.70 & 80.07 & 81.98 & \underline{82.58} &  \colorbox{light-gray}{\textbf{84.21}} \\
			CiteSeer & 71.31 & 70.70 & 71.53 & 71.28 & \underline{72.33} &  \colorbox{light-gray}{\textbf{73.03}} \\
			PubMed & 79.23 & 78.01 & 78.27 & 78.93 & \underline{79.35} &  \colorbox{light-gray}{\textbf{80.26}} \\
			Chameleon & 48.17 & 48.09 & 48.24 & 49.76 & \underline{50.05} & \colorbox{light-gray}{\textbf{51.93}} \\
			Squirrel & 35.58 & 36.29 & 35.97 & 36.09 & \underline{36.35} &  \colorbox{light-gray}{\textbf{38.92}} \\
			Flickr & 50.37 & 51.09 & 50.88 & 51.15 & \underline{51.37} & \colorbox{light-gray}{\textbf{51.73}} \\
			\bottomrule
		\end{tabular}
	\end{table}

	\subsubsection{Effects of MME} 
	To validate the effects of Multi-order Moments Embedding (MME) module, we design several model variants as shown in Table \ref{tab:ablation}. The variants in the first three columns use only one moment (e.g. 1st-order moment) for neighborhood aggregation. The Ensemble-Mean in the table is another variant that ensemble all single-moment models with the average logits of all single-moment variants.  And we observe that each single-moment model has different performance on different datasets. And the simple ensemble of them usually have better performance. However, our full MM-GNN (the  rightmost column) always has the best performance on all datasets against all variants.

	\subsubsection{Effects of AMA}
	To validate the effects of the Attention-based Moment Adaptor (AMA) module, we design a variant of MM-GNN by removing this module and use a simple MLP to fuse the output of MME module. MM-GNN with Attention-based Moment Adaptor gains the state-of-the-art results on all datasets compared with the simple MLP fusion method, which demonstrates the superiority of the Attention-based Moment Adaptor module.

	\begin{table*}
		\small
		\setlength{\tabcolsep}{2.6pt}
		\caption{Performance (accuracy) on node classification task with different training set ratio on Facebook social networks. }
		\label{tab:res2}
		\begin{tabular}{cccccccccccccccc}
			\toprule
			Dataset & \multicolumn{3}{c}{Northeastern} & \multicolumn{3}{c}{UF} &\multicolumn{3}{c}{GWU} &\multicolumn{3}{c}{UGA}&\multicolumn{3}{c}{Howard}  \\
			\cmidrule(lr){2-4}\cmidrule(lr){5-7}\cmidrule(lr){8-10}\cmidrule(lr){11-13}\cmidrule(lr){14-16}
			Training &$40\%$&$20\%$& $10\%$&$40\%$&$20\%$& $10\%$&$40\%$&$20\%$& $10\%$&$40\%$&$20\%$& $10\%$&$40\%$&$20\%$& $10\%$\\
			\midrule
			GCN  & 90.34 & 90.38 & 90.35 & 83.97 & 83.76 & 83.62 & 85.76 & 85.65 & 85.67 & 85.5 & 85.46 & 85.42 & 91.17 & 90.95 & 90.99   \\
			GAT  & 88.57 & 88.63 & 88.69 & 79.79 & 79.89 & 79.85 & 83.95 & 84.07 & 83.98 & 82.87 & 82.92 & 82.79 & 89.09 & 88.92 & 88.93   \\
			GraphSAGE & 91.91 & 92.22 & 92.18 & 86.26 & 86.34 & \underline{86.51} & 89.31 & \underline{89.45} & \underline{89.15} & 88.21 & \underline{88.09} & \underline{87.77} & 93.21 & 93.29 & 93.17\\
			GIN  & 89.45 & 89.59 & 89.57 & 85.64 & 85.44 & 85.57 & 84.27 & 83.98 & 84.07 & 87.57 & 87.44 & 87.34 & 89.40 & 89.24 & 89.31  \\
			APPNP  & 89.76 & 89.94 & 89.85 & 81.89 & 81.92 & 81.87 & 87.14 & 87.11 & 87.12 & 84.79 & 84.75 & 84.67 & 89.20 & 89.51 & 89.27 \\
			JKNet  & 92.23 & 92.35 & 92.28 & 85.51 & 85.67 & 85.49 & 89.12 & 89.06 & 89.02 & 87.91 & 87.82 & 87.81 & 93.15 & 93.14 & 93.17 \\
			DAGNN  & \underline{92.49} & \underline{92.36} & \underline{92.30} & \underline{86.86} & \underline{86.46} & {85.70} & \underline{89.73} & {89.31} & 89.11 & \underline{88.35} & 87.79 & 87.62 & \underline{93.31} & \underline{93.33} & \underline{93.27}  \\
			MM-GNN & \colorbox{light-gray}{\textbf{93.08}} & \colorbox{light-gray}{\textbf{92.96}} & \colorbox{light-gray}{\textbf{92.73}} & \colorbox{light-gray}{\textbf{88.34}} & \colorbox{light-gray}{\textbf{87.87}} & \colorbox{light-gray}{\textbf{87.38}} & \colorbox{light-gray}{\textbf{90.63}} & \colorbox{light-gray}{\textbf{90.50}} & \colorbox{light-gray}{\textbf{90.14}} & \colorbox{light-gray}{\textbf{89.52}} & \colorbox{light-gray}{\textbf{89.38}} & \colorbox{light-gray}{\textbf{88.91}} & \colorbox{light-gray}{\textbf{94.03}} & \colorbox{light-gray}{\textbf{93.88}} & \colorbox{light-gray}{\textbf{93.76}}\\
			
			\bottomrule
		\end{tabular}
	\end{table*}
	\begin{figure*}[h]
		\begin{minipage}[t]{0.3\linewidth}
			\centering
			\subfloat[Social network]{\includegraphics[width=\linewidth]{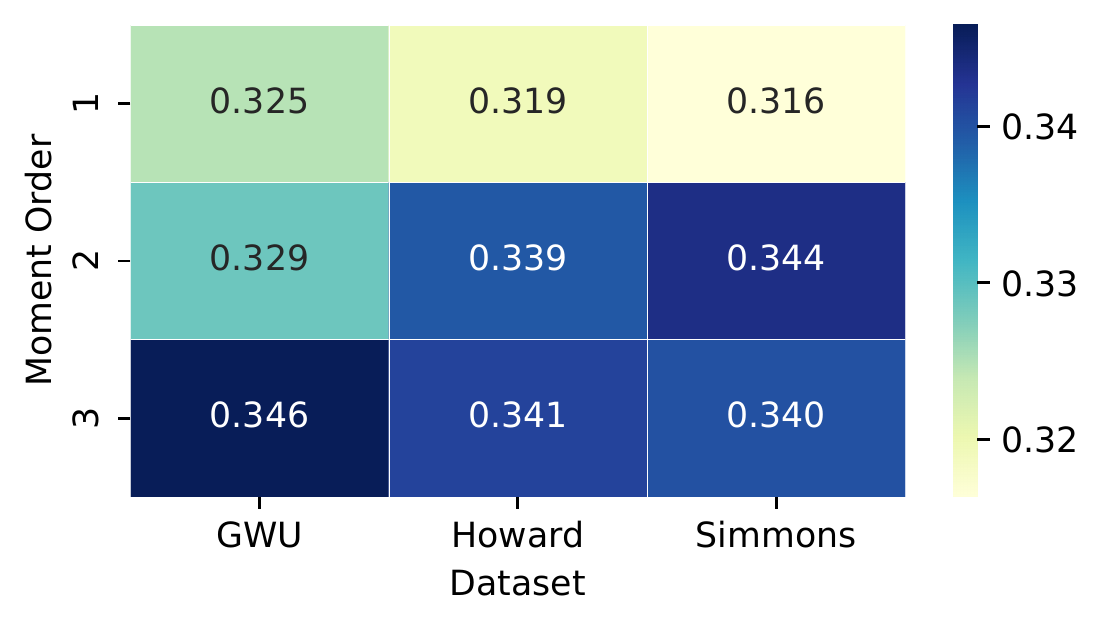}}
		\end{minipage}\quad
		\begin{minipage}[t]{0.3\linewidth}
			\centering
			\subfloat[Citation network]{\includegraphics[width=\linewidth]{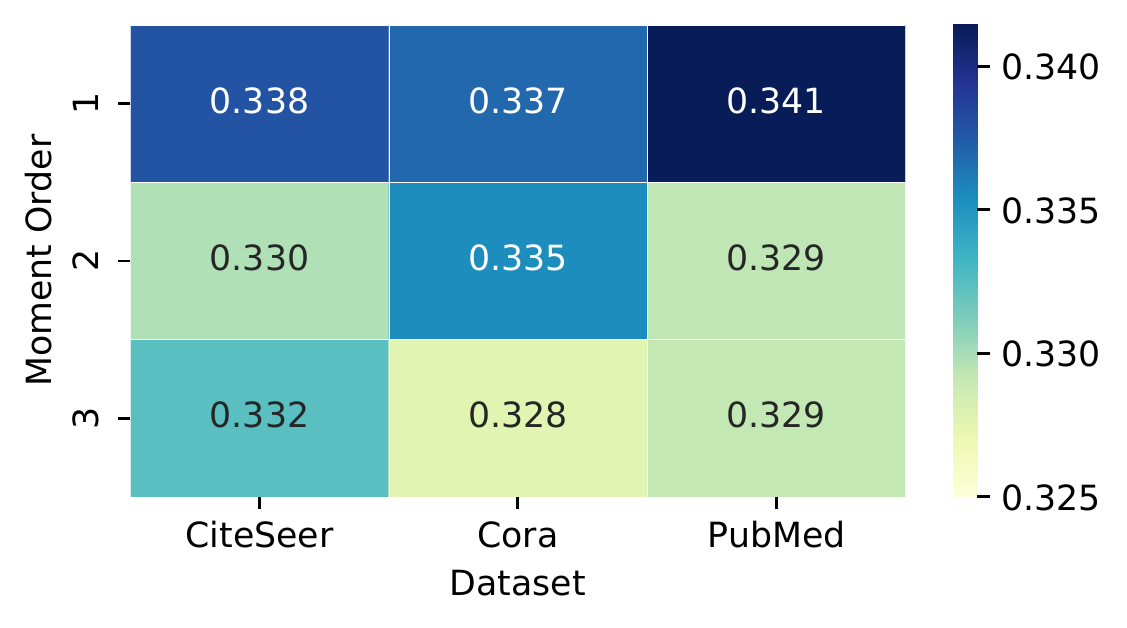}}
		\end{minipage}\quad
		\begin{minipage}[t]{0.3\linewidth}
			\centering
			\subfloat[Webpage/Image network]{\includegraphics[width=\linewidth]{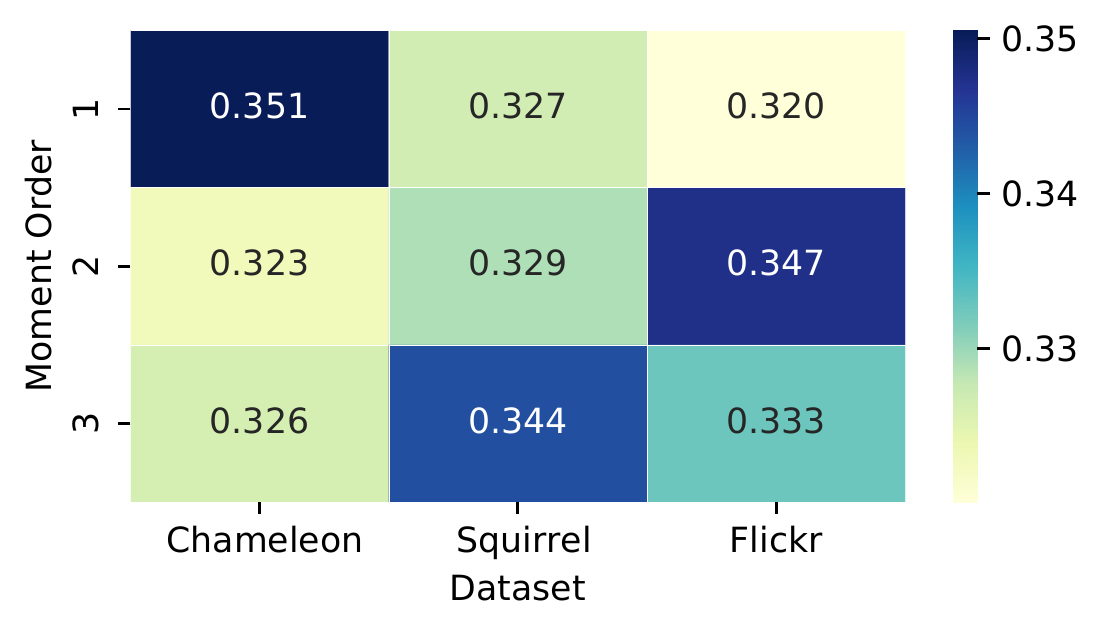}}
		\end{minipage}
		\caption{Average attention value of different moments. The value in each block means the averaged attentions of all nodes and all feature dimensions. And the deeper color means larger attention values learned by MM-GNN.}
		\label{fig:attention_heatmap}
	\end{figure*}

	\subsection{Robustness Analysis}
	In this section, we evaluate the robustness of our MM-GNN by changing the training set ratio and  the key hyper-parameters.
	\subsubsection{Analysis on effects of training set ratio: } We conducted extensive experiments with different training set ratios (10\%, 20\%, 40\%) on the Facebook social networks. And due to space limitations, we only show results on five of these datasets. As shown in Table \ref{tab:res2},  our MM-GNN model has the best performance on each split of the five graphs even when the proportion of training set is small.
	
	\begin{figure}[h]
		\begin{minipage}[t]{0.47\linewidth}
			\centering
			\subfloat[Simmons, GWU, Cora]{\includegraphics[width=\linewidth]{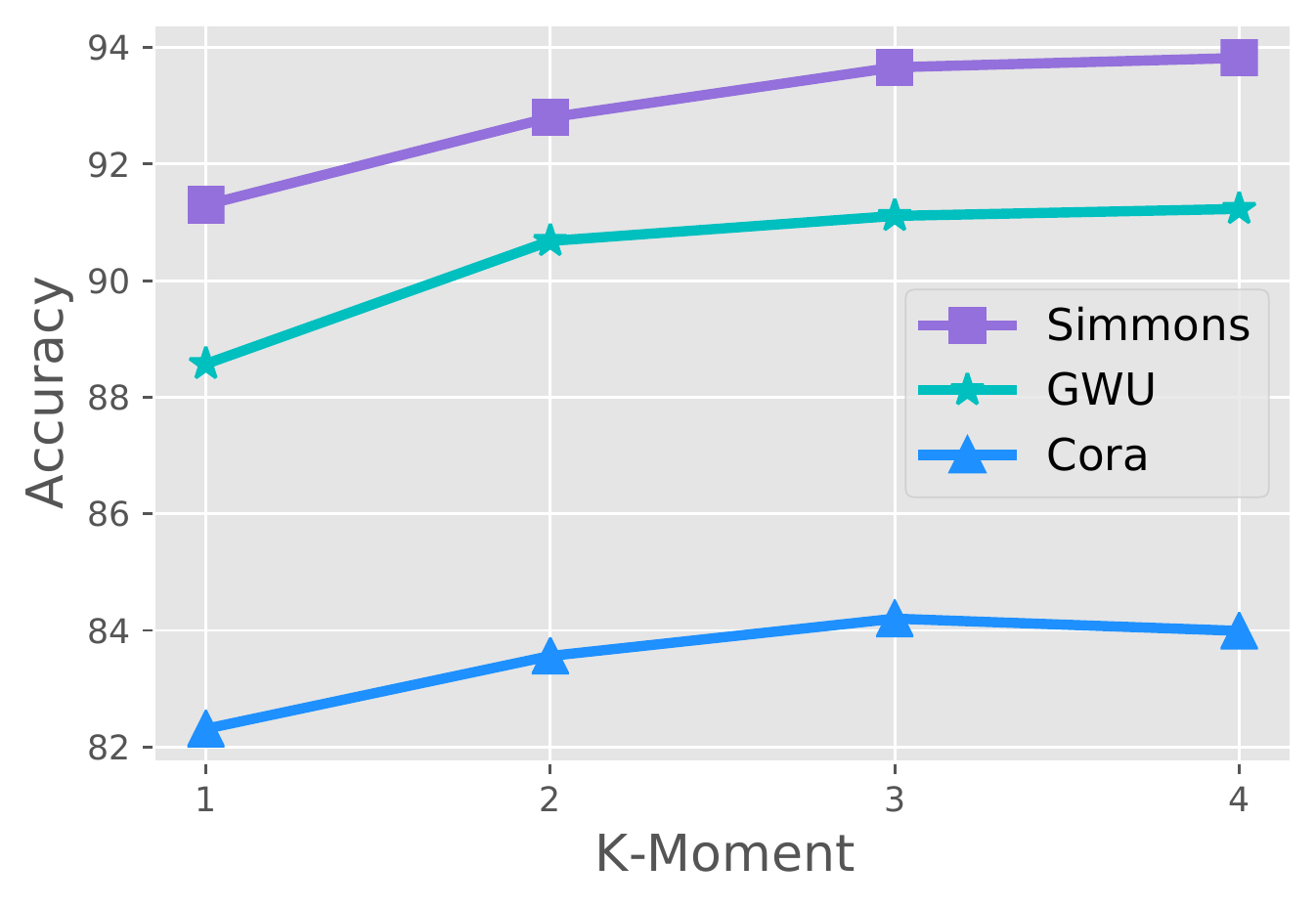}}
		\end{minipage}%
		\quad
		\begin{minipage}[t]{0.47\linewidth}
			\centering
			\subfloat[Chameleon, Squirrel, Flickr]{\includegraphics[width=\linewidth=]{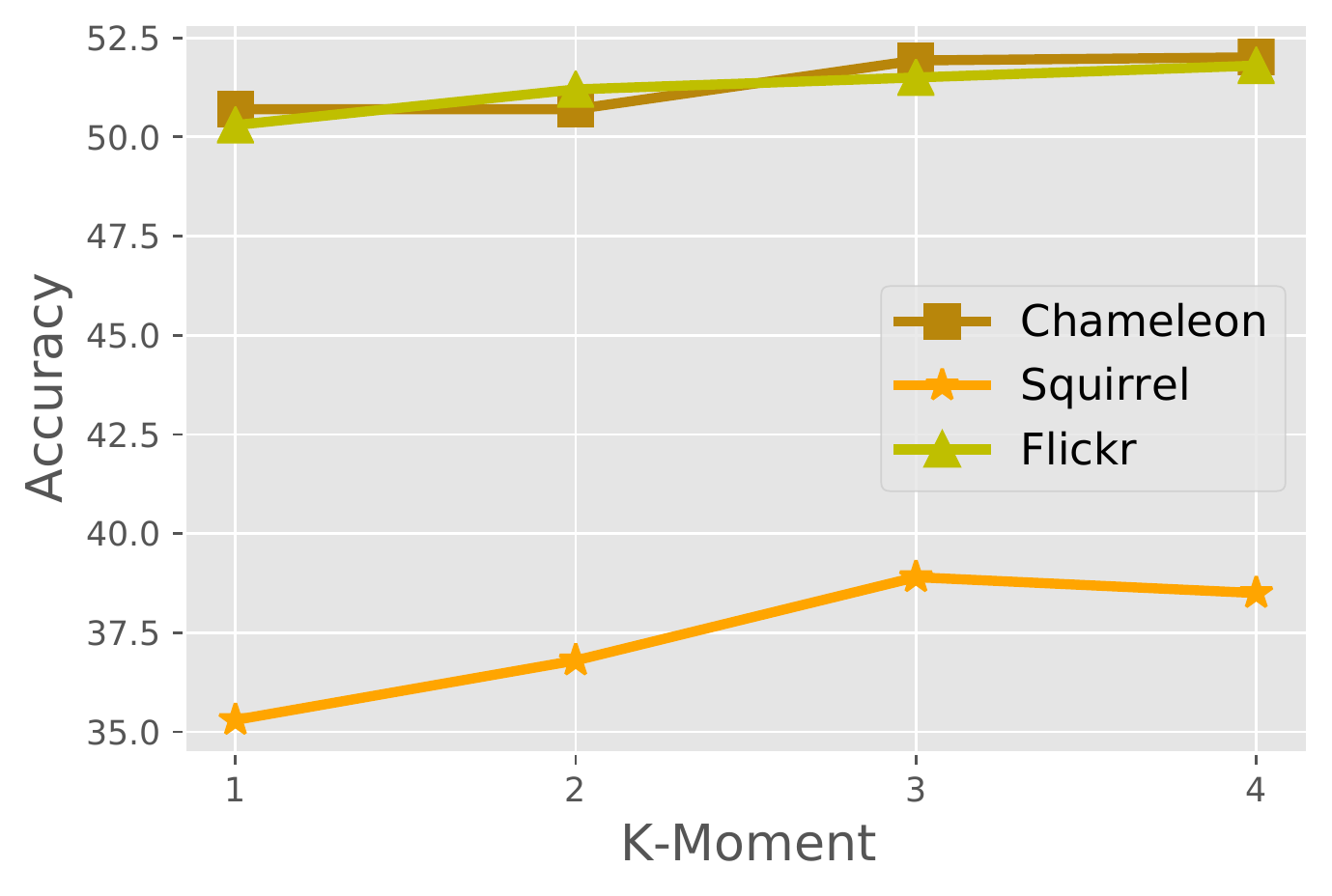}}
		\end{minipage}
		\caption{Hyper-parameter analysis on max ordinal of moments $K$ used in MM-GNN}
		\label{fig:param_analysis_k}
	\end{figure}
	\subsubsection{Analysis on the hyper-parameter $K$: }  	
	As shown in the Fig.~\ref{fig:param_analysis_k}, the performance of MM-GNN is improved with larger $K$ (max ordinal of moments). And when $K$ is no less than 3, the performance of our model is convergent, which matches the data complexity.

	\subsection{Visualizations}
	\label{subsec:visualization}
	The Attention-based Moment Adaptor (AMA) improves the classification performance and also provides interpretability for MM-GNN, which plays an critical role in selecting important orders of moments for each node on the graph. To further explore the interpretability of the learned attention, 
	we analyze their effects through attention visualization. We output the attention value learned by the first layer of MM-GNN for each node and compute the averaged attention of all nodes on all hidden dimensions. As shown in the Fig.~\ref{fig:attention_heatmap}, the average attention values for each moment on different graph dataset present different patterns. MM-GNN output larger weights for higher-order moments on the social networks, while output smaller weights for higher-order moments on citation networks. Moreover, the magnitude of learned attentions are aligned with the performance of the single-moment model shown in Table \ref{tab:ablation}, which further demonstrates the interpretability of MM-GNN.

	\balance
	
    \section{Related work}
	In recent years, Graph Neural Networks (GNNs) have been widely studied on graph representation learning for its expressive power \cite{du2021tabularnet,yao2022trajgat,wang2019tag2gauss,h2gcn,monti2018motifnet,wang2022powerful,bi2022company}. 
	Spectral GNNs \cite{ChebyNet,spectralGNN, GWNN} were first proposed based on graph signal processing theory. Then Kipf et al. proposed GCN \cite{GCN}, which simplifies the convolution operation with low pass filtering on graphs and bridges the gap between spectral GNNs and spatial GNNs. GAT further \cite{GAT} introduced an attention mechanism to GNNs to learn the importance of different neighbors, instead of using equal weights for neighbors. Afterwards, some researchers turned their attention to considering long-range dependencies \cite{dwivedi2021graph, ying2021transformers, abu2019mixhop}. DAGNN \cite{DAGNN} is a deeper GCN model with decoupled transformation and aggregation strategy .
 
	However, most existing GNNs update node representations by aggregating feature from neighbors with single statistic (e.g., mean, max, sum). Though such lower-order statistics  can also reflect certain characteristics of neighbor distribution, they lose the full information of the neighbor's feature distribution. Even for the simple Gaussian distribution which has two degree of freedom (mean, variance), and the information of 2nd-order statistic variance cannot be retained by mean/max/sum (1st-order statistic). Some graph filter based methods \cite{dehmamy2019understanding,corso2020principal} proposed to model graph signals by integrating multiple types of aggregators, without considering the relationship between different signals and neighborhood distribution.  Recently, some works \cite{du2021gbk, ma2022meta, luo2022ada, bi2022make} begin to study the modeling of neighbor distribution on the graph, but they do not consider the information loss caused by single lower-order statistics.

	\section{Conclusion}
	In this paper, we embrace the critical roles of neighbor's feature distribution for modeling graph data. And we introduce the multi-order moments to model neighbor's feature distribution on graphs, fixing the severe limitations of existing GNNs using only single statistic for neighborhood aggregation. Based on the method of moments, we propose a novel Mix-Moment GNN model with adaptive attention mechanism, which includes two components, the Multi-order Moments Embedding (MME) and the Attention-based Moment Adaptor (AMA). MM-GNN use multi-order moments to represent neighbor's feature distribution and select important moments for nodes adaptively. The  consistent improvement of MM-GNN over other state-of-the-art  GNNs on 15 real-world graphs demonstrate the superiority of our method. 
	

	\bibliographystyle{ACM-Reference-Format}
	\bibliography{sample-base}

\end{document}